\newtheorem{theorem}{Theorem}[section]
\newtheorem{lemma}[theorem]{Lemma}
\begin{document}

%

%

\twocolumn[

\aistatstitle{A Uniformly Consistent Estimator of non-Gaussian Causal Effects Under the \textit{k}-Triangle-Faithfulness Assumption}

\aistatsauthor{ Shuyan Wang \And Peter Spirtes }

\aistatsaddress{ shuyanw@andrew.cmu.edu \And  ps7z@andrew.cmu.edu } ]

\begin{abstract}
 Kalisch and B\"{u}hlmann (2007) showed that for linear Gaussian models, under the Causal Markov Assumption, the Strong Causal Faithfulness Assumption, and the assumption of causal sufficiency, the PC algorithm is a uniformly consistent estimator of the Markov Equivalence Class of the true causal DAG for linear Gaussian models; it follows from this that for the identifiable causal effects in the Markov Equivalence Class, there are uniformly consistent estimators of causal effects as well. The $k$-Triangle-Faithfulness Assumption is a strictly weaker assumption that avoids some implausible implications of the Strong Causal Faithfulness Assumption and also allows for uniformly consistent estimates of Markov Equivalence Classes (in a weakened sense), and of identifiable causal effects. However, both of these assumptions are restricted to linear Gaussian models. We propose the Generalized $k$-Triangle Faithfulness, which can be applied to any smooth distribution. In addition, under the Generalized $k$-Triangle Faithfulness Assumption, we describe the Edge Estimation Algorithm that provides uniformly consistent estimates of causal effects in some cases (and otherwise outputs ``can't tell"), and the \textit{Very Conservative }$SGS$ Algorithm that (in a slightly weaker sense) is a uniformly consistent estimator of the Markov equivalence class of the true DAG.
\end{abstract}

\section{Introduction}
 It has been proved that under the Causal Markov, Faithfulness assumptions and Causal Sufficiency Assumption, there are no uniformly consistent estimators of Markov equivalence classes of causal structures represented by directed acyclic graphs (DAG)(Robins  et  al.,  2003). Kalisch and B\"{u}hlmann (2007) showed that for linear Gaussian models, under the Causal Markov Assumption, the Strong Causal Faithfulness Assumption, and the assumption of causal sufficiency, the PC algorithm is a uniformly consistent estimator of the Markov Equivalence Class of the true causal DAG for linear Gaussian models; it follows from this that for the identifiable causal effects in the Markov Equivalence Class, there are uniformly consistent estimators of causal effects as well. The $k$-Triangle-Faithfulness Assumption is a strictly weaker assumption that avoids some implausible implications of the Strong Causal Faithfulness Assumption and also allows for uniformly consistent estimates of Markov Equivalence Classes (in a weakened sense), and of identifiable causal effects. 
 
 However, both of these assumptions are restricted to linear Gaussian models. We propose the Generalized k-Triangle Faithfulness, which can be applied to any smooth distribution. In addition, under the Generalized k-Triangle Faithfulness Assumption, we describe the Edge Estimation Algorithm that provides uniformly consistent estimates of causal effects in some cases (and otherwise outputs "can't tell"), and the Very Conservative SGS Algorithm that (in a slightly weaker sense) is a uniformly consistent estimator of the Markov equivalence class of the true DAG.
\section{The Basic Assumptions for Causal Discovery}
\subsection{DAG and Causal Markov Condition}
We use directed acyclic graphs to represent causal relations between variables.  A directed graph $G=\langle \mathbf{V,E}\rangle$ consists of a set of \textit{nodes} \textbf{V} and a set of \textit{edges} $\mathbf{E\subset V\times  V}$. If there is an edge $\langle A,B\rangle \in \mathbf{E}$ , we write $A\rightarrow B$. $A$ is a \textbf{parent} of $B$, and $B$ is a \textbf{child} of $A$, the edge is out of $A$ and into $B$, and $A$ is the source and $B$ is the target.  A \textbf{directed path} from $X$ to $Y$ is a 
sequence of ordered edges where the source of the first edge is $X$, the target of the last edge is $Y$, and if there are n edges in the sequence, for $1\leq i <n$, the target of the $i$th edge is the source of the $i+1$th edge;  $X$ is an \textbf{ancestor}  of $Y$, and $X$ is a \textbf{descendant} of $Y$.   

If a variable $Y$ is in a structure $X\rightarrow Y\leftarrow Z$, and there is no edge between $X$ and $Z$, we call $\langle X,Y,Z\rangle$ an \textit{unshielded collider}; if there is also an edge between $X$ and $Z$,then $\langle X,Y,Z\rangle$ is a triangle and we call $\langle X,Y,Z\rangle$ a \textit{shielded collider}.  If $\langle X,Y,Z\rangle$ is a triangle but $Y$ is not a child of both $X$ and $Z$, we call $\langle X,Y,Z\rangle$ a \textit{shielded non-collider}; if there is no edge between $X$ and $Z$, then $\langle X,Y,Z\rangle$
is a \textit{unshielded non-collider.}

 A \textit{Bayesian network} is an ordered pair $\langle P,G\rangle$ where $P$ is a probability distribution over a set of variables $\textbf{V}$ in $G$. A distribution $P$ over a set of variables $\mathbf{V}$ satisfies the \textbf{ (local) Markov condition} for $G$ if and only if each variable in $\mathbf{V}$ is independent of its non-parents and non-descendants, conditional on its parents.
Given $M=\langle P,G\rangle$, $P_M$ denotes $P$ and $G_M$ denotes $G$.  Two acyclic directed graphs (DAG) $G_1$ and $G_2$ are \textit{ Markov equivalent} if conditional independence relations entailed by Markov condition in $G_1$ are the same as in $G_2$.  It has been proven that two $DAG$s are \textit{Markov equivalent} if and only if they have the same adjacencies and same \textit{unshielded colliders} (Verma and Pearl, 1990). A \textit{pattern} $O$ is an undirected graph that represents a set $M$ of Markov equivalent DAGs: an edge $X\rightarrow Y$ is in $O$ if it is in every DAG in $M$; if $X\rightarrow Y$ is in some DAG and $Y\rightarrow X$ in some other DAG in $M$, then $X-Y$ in $O$  (Spirtes and Zhang, 2016) 

We assume \textbf{causal sufficiency}, which means that $\textbf{V}$ contains all direct common causes of variables in $\textbf{V}$.

\subsection{Faithfulness, linear Gaussian case and \textit{k-}Triangle-Faithfulness}

Given a $\langle P,G\rangle$ that satisfies \textbf{Markov Condition}, we say that $P$ is \textit{faithful} to $G$ if any conditional independence relation that holds in $P$ is entailed by $G$ by the \textbf{Markov Condition}.   We further make the Causal Markov and Faithfulness assumption (Spirtes et  al.,  2001)::

\textbf{Causal Markov Assumption:} If the true causal model $M$ of a population is causally sufficient, each variable in $V$ is independent of its non-parents and non-descendants, conditional on its its parents in $G_M$ (Spirtes and Zhang, 2016). 

\textbf{Causal Faithfulness Assumption: } all conditional independence relations that hold in the population are consequences of the Markov condition from the underlying true causal DAG.

In this paper we talk about cases where $P_M$ over $\mathbf{V}$ for $G=\langle \mathbf{V,E}\rangle$ respects the \textbf{Causal Markov Assumption}.  If $P_M$ is \textit{faithful} to $G_M$ and all variables in $M$ are Gaussian and all causal relations are linear, that is, any $X_i\in\mathbf{V}$ can be written as:

\begin{center}
    $X_i =  \underset{X_j\in Pa_M(X_i)}{\sum} a_{i,j}X_j+\epsilon_j$  
\end{center}  

where $Pa_M(X)$ denotes the set of parents of $X$ in $G_M$, the set of $X$ variables is jointly standard Gaussian, $a_{i,j}$ is a real valued coefficient, and the set of $\epsilon_j$ are jointly Gaussian and jointly independent, conditional correlation between any two variables $\rho_{X,Y|\mathbf{A}}=0$ where $X,Y\in \mathbf{V}$ and $\mathbf{A\subset V}\setminus{\{X,Y}\}$ implies that there is no edge between $X$ and $Y$.  Based on the equation above, we define in the linear Gaussian case the \textit{edge strength} $e_M(X_j\rightarrow X_i)$ as the corresponding coefficient $a_{i,j}$.

It has been proved that under the Causal Markov and Faithfulness assumptions, there are no uniformly consistent estimators of Markov equivalence classes of causal structures represented by DAG (Robins et al., 2003).  Kalisch and B\"{u}hlmann (2007) showed that such uniform consistency is achieved by the $PC$ algorithm if the underlying DAG is sparse relative to the sample size under a strengthened version of Faithfulness Assumption.  This \textbf{Strong Causal Faithfulness Assumption }  in the linear Gaussian case bounds the absolute value of any partial correlation not entailed to be zero by the true causal DAG away from zero by some positive constants. It has the implausible consequence that it puts a lower bound on the strength of edges, since a very weak edge entails a very weak partial correlation. However, the \textbf{Strong Causal Faithfulness Assumption } can be weakened to the strictly weaker (for some values of $k$)  \textbf{\textit{k-}Faithfulness Assumption} while still achieving uniform consistency. Furthermore, at the cost of having a smaller subset of edges oriented, the \textbf{\textit{k-}Faithfulness Assumption} can be weakened to the \textbf{\textit{k-}Triangle-Faithfulness Assumption}, while still achieving uniform consistency and can be relaxed while preserving the uniform consistency: the \textbf{\textit{k-}Triangle-Faithfulness Assumption} (Spirtes and Zhang, 2014) only bounds the conditional correlation between variables in a triangle structure from below by some functions of the corresponding edge strength:

\textbf{\textit{k-}Triangle-Faithfulness Assumption:}  
Given a set of variables \textbf{V}, suppose the true causal model over \textbf{V} is $M=\langle P,G\rangle$, where $P$ is a Gaussian distribution over \textbf{V}, and \textit{G} is a DAG with vertices \textbf{V}. For any variables \textit{X}, \textit{Y}, \textit{Z} that form a triangle in \textit{G}:
\begin{itemize}
    \item if \textit{Y} is a non-collider on the path $\langle X,Y,Z\rangle$, then $|\rho_M(X,Z|\mathbf{W})|\geq K\times |e_M(X-Z)|$ for all $\mathbf{W\subset V}$ that do not contain Y; and
    \item if \textit{Y} is a collider on the path $\langle X,Y,Z\rangle$, then $|\rho_M(X,Z|\mathbf{W})|\geq K\times |e_M(X-Z)|$ for all $\mathbf{W\subset V}$ that do contain Y.
\end{itemize}

where the $X-Z$ represents the edge between $X$ and $Z$ but the direction is not determined.(Sprites and Zhang, 2014)

The \textbf{\textit{k-}Triangle-Faithfulness Assumption} is strictly weaker than the \textbf{Strong Faithfulness Assumption }in several respects: the \textbf{Strong faithfulness Assumption }does not allow edges to be week anyywhere in a graph, while the \textbf{\textit{k-}Triangle-Faithfulness Assumption} only excludes conditional correlations $\rho(X,Z|\mathbf{W})$ from being too small if $X$ and $Z$ are in some triangle structures $\langle X,Y,Z\rangle$ and $X-Z$ is not a weak edge; for every $\epsilon$ used in the \textbf{Strong Faithfulness Assumption }as the lower bound for any partial correlation, there is a $k$ for the \textbf{\textit{k-}Triangle-Faithfulness Assumption} that gives a lower bound smaller than $\epsilon$.

\subsection{VCSGS Algorithm}
The algorithm we use to infer the structure of the underlying true causal graph is \textit{Very Conservative SGS} ($VCSGS$) algorithm, which takes uniformly consistent tests of conditional independence as input:

\textbf{VCSGS Algorithm}
\begin{enumerate}
    \item Form the complete undirected graph $H$ on the given set of variables $\mathbf{V}$.
    \item For each pair of variables $X$ and $Y$ in $\mathbf{V}$, search for a subset $\mathbf{S} $ of $\mathbf{V}\setminus\{X, Y \}$ such that $X$ and $Y$ are independent conditional on $\mathbf{S}$. Remove the edge between $X$ and $Y$ in $H$ if and only if such a set is found.
    \item Let $K$ be the graph resulting from Step 2. For each unshielded triple $\langle X, Y, Z\rangle$ (the only two variables that are not adjacent are $X$ and $Z$ ),
    \begin{enumerate}
        \item If $X$ and $Z$ are not independent conditional on any subset of $\mathbf{V}\setminus{X, Z}$ that contains $Y$, then orient the triple as a collider: $X \rightarrow Y \leftarrow Z$.
        \item If $X$ and $Z$ are not independent conditional on any subset of $\mathbf{V}\setminus{X, Z}$ that does not contain Y, then mark the triple as a non-collider.
        \item Otherwise, mark the triple as ambiguous.
    \end{enumerate}
    \item Execute the following orientation rules until none of them applies:
    \begin{enumerate}
        \item If $X \rightarrow Y-Z$, and the triple $\langle X, Y, Z\rangle$ is marked as a non-collider, then orient $Y-Z$ as $ Y\rightarrow Z$.
        \item If $X \rightarrow Y\rightarrow Z$ and $X-Z$, then orient $X-Z$ as $X\rightarrow Z$.
        \item If $X\rightarrow Y\rightarrow Z$, another triple $\langle X,W,Z\rangle$ is marked as a non-collider, and $W-Y$, then orient $W-Y$ as $W\rightarrow Y$.
    \end{enumerate}
    \item Let $M$ be the graph resulting from step 4. For each consistent disambiguation
of the ambiguous triples in $M$ ($i.e.$, each disambiguation that leads to a
pattern), test whether the resulting pattern satisfies the Markov condition. If
every pattern does, then mark all the `apparently non-adjacent’ pairs as
`definitely non-adjacent’.
\end{enumerate}

It has been proved that under the \textbf{\textit{k-}Triangle-Faithfulness Assumption}, $VCSGS$ algorithm is uniformly consistent in the inference of graph structure. Furthermore, a follow-up algorithm that estimates edge strength given the output of $VCSGS$ also reaches uniform consistency. We are going to prove that the uniform consistency of the estimation of the causal influences under the \textbf{\textit{k-}Triangle-Faithfulness Assumption} can be extended to discrete and nonparametric cases as long as there are uniformly consistent tests of conditional independence (which in the general case requires a smoothness assumption), by showing that missed edges in the inference of causal structure are so weak that the estimations of the causal influences are still uniformly consistent.

\section{Nonparametric Case}
For nonparametric case, we consider variables supported on $[0,1]$.
We first define the strength of the edge $X\rightarrow Y$ as the maximum change in $L1$ norm of the probability of $Y$ when we condition on different values of $X$ while holding everything else constant:

\begin{center}
    
If $x\in Pa(Y):$
$e(X\rightarrow Y) :=max_{pa_{\setminus\{x\}}(Y)\in [Pa(Y)\setminus\{X\}]}max_{x_1,x_2\in[X]}||p_{Y|x_1,pa_{\setminus\{x\}}(Y)}- p_{Y|x_2,pa_{\setminus\{x\}}(Y)}||_1$

\end{center}

where $[X]$ denotes the set of values that $X$ takes,  $[Pa(Y)]\subset [0,1]^{|Pa(Y)|}$ the set of values that parents of Y take, $p_{Y|x_1,pa_{\setminus\{x\}}(Y)}$ the probability distribution of $Y|X=x_1, Pa(Y)\setminus\{X\}=pa_{\setminus\{x\}}(Y)$ and $p_{y|x_1,pa_{\setminus\{x\}}(Y)}$ the density of $p_{Y|x_1,pa_{\setminus\{x\}}(Y)}$ 
for $Y=y$. Since we are conditioning on the set of parents, the conditional probability is equal to the manipulated probability.

  Then we can make the \textbf{\textit{k-}Triangle-Faithfulness Assumption}: given a set of variables \textbf{V}, where the true causal model over \textbf{V} is $M=\langle P,G\rangle$, $P$ is a distribution over \textbf{V}, and \textit{G} is a DAG with vertices \textbf{V}, for any variables \textit{X}, \textit{Y}, \textit{Z} that form a triangle in \textit{G}:
\begin{itemize}
    \item if \textit{Z} is a non-collider on the path $\langle X,Z,Y\rangle$, given any subset $\mathbf{W\subset V}\setminus \{X,Y,Z\}$,\newline $min_{\mathbf{w\in [W]}}min_{x_1, x_2\in[X]}||p_{Y|\mathbf{w},x_1}- p_{Y|\mathbf{w},x_2}||_1\geq K_Y e(X\rightarrow Y)$ for some $K_Y>0$
    \item if \textit{Z} is a collider  on the path $\langle X,Z,Y\rangle$, then for every $y\in [Y]$, given any subset $\mathbf{W\subset V}\setminus \{X,Y,Z\}$,$min_{\mathbf{w\in [W]}}min_{z\in [Z]}min_{x_1,x_2\in[X]}||p_{Y|\mathbf{w}, x_1, z}-p_{Y=y|\mathbf{w},x_2,z}||_1\geq K_Y e(X\rightarrow Y)$ for some $K_Y>0$
\end{itemize}

In order to have uniformly consistent tests of conditional independence, we make smoothness assumption for continuous variables with the support on $[0,1]$:\newline
 \textit{ TV Smoothness(L): } Let $\mathcal{P}_{[0,1],TV(L)}$ be the collection of distributions $p_{Y,\mathbf{A}}$, such that for all $\mathbf{a},\mathbf{a'}\in [0,1]^{|\mathbf{A}|}$, we have:\newline
    $||p_{Y|\mathbf{A=a}}-p_{Y|\mathbf{A=a'}}||_1\leq L||\mathbf{a-a'}||_1$
    
Given the TV smoothness(L), $p$ is continuous.  Furthermore, since $[0,1]^d$ ($d\in \mathbb{N}$) is compact, for any $\mathbf{W,U\subset V}$ (the set of all variables in the true causal graph) , $p_{U|W}$ attains its max and min on its support.  Since $\mathbf{|V|}$ is finite, we can further assume conditional densities are non-zero (NZ(T)):\newline
for any $X\in \mathbf{V}$, $\mathbf{U\subset V}$, $p_{X|\mathbf{U}}\geq T$ for some $1>T>0$.

Notice that by TV Smoothness(L) and that variables have support on $[0,1]$, we can derive an upper bound for probability of any variable given its parents: 

$p_{y|{Pa(Y)=pa(Y)}} \leq ||p_{Y|Pa(Y)=pa(Y)}||_1\newline\leq ||p_{Y|Pa(Y)=pa'(Y)}||_1 + L||pa(Y)-pa'(Y)||_1\newline\leq (1+L|Pa(Y)|) $ 
 
Although the discrete probability case does not have support on $[0,1]$, and its probability is not continuous, it still satisfies the TV smoothness(L) assumption: for instance, if the discrete variables have support only on integers, we can set $L=1$. By replacing the density $p_{X|\mathbf{U}}$ with the probability $P(X|\mathbf{U})$ in NZ(T) assumption, we have a NZ(T) assumption for the discrete case.  Therefore the proof of uniform consistency for the nonparametric case in the rest of the paper also works for the discrete case.

\subsection{Uniform Consistency in the inference of structure}
We use $L1$ norm to characterize dependence :\newline $\epsilon_{X,Y|\mathcal{A}}=||p_{X,Y,\mathcal{A}}-p_{X|\mathcal{A}}p_{Y|\mathcal{A}}p_{\mathcal{A}}||_1$.We want a test $\psi$ of $H_0: \epsilon = 0$ versus $H_1: \epsilon>0 $. $\psi$ is a family of functions: $\psi_0...\psi_n...$ one for each sample size, that takes an i.i.d sample $V_n$ from the joint distribution over $\mathbf{V}$.  Then the test is uniformly consistent w.r.t. a set of distributions $\Omega$ for :
\begin{itemize}
    \item $lim_{n\rightarrow \infty} sup_{P\in\mathcal{P}_{[0,1],TV(L)},\epsilon(P)=0}P^n(\psi_n(V_n)=1)=0$
    \item for every $\delta>0$, $lim_{n\rightarrow \infty} sup_{\epsilon(P)\geq\delta}P^n(\psi_n(V_n)=0)=0$
\end{itemize}

With the TV Smoothness(L) assumption, there are uniformly consistent tests of conditional independence, such as a minimax optimal conditional independence test proposed by Neykov et al.(2020).

Given any causal model $M=lrangle P,G \rangle$ over $\mathbf{V}$, let $C(n,M)$ denote the (random) output of the $VCSGS$ algorithm given an$i.i.d.$ sample of size size $n$ from the distribution $P_M$, then there are three types of errors that it can contain that will mislead the estimation of causal influences:
\begin{enumerate}
    \item $C(n,M)$ \textit{errs in kind I} if it has an adjacency not in $G_M$;
    \item  $C(n,M)$ \textit{errs in kind II} if every adjacency it has is in $G_M$, but it has a marked non-collider not in $G_m$;
    \item  $C(n,M)$ \textit{errs in kind III} if every adjacency and marked non-collider it has is in $G_M$, but it has an orientation not in $G_M$
\end{enumerate}

If $C(n.M)$ errs in either of these three way, there will be variable $X$ and $Y$ in $C(n,M)$ such that $X$ is treated as a parent of $Y$ but is not in the true graph $G_M$; if there is no undirected edge connecting $Y$ in this $C(n,M)$, the algorithm will estimate the causal influence of `` parents'' of $Y$ on $Y$, but such estimation does not bear useful information since intervening $X$ does not really influence $Y$.  Notice that missing an edge is not listed as an mistake here, and we are going to prove later that the estimation of causal influence can still be used to correctly predict the effect of intervention even if the algorithm misses edges.

Let $\phi^{k,L,T}$ be the set of causal models over \textbf{V} under $k$-Triangle-Faithfulness Assumption, TV smoothness(L) and the assumptions of NZ(T). 

We will prove that under the causal sufficiency of the measured variables \textbf{V}, causal Markov assumption, k-Triangle-Faithfulness, TV smoothness(L) assumption and NZ(T) assumption,
\begin{center}
    $\underset{n\rightarrow\infty}{lim}\underset{M\in \phi^{k,L,C}}{sup}P^n_M(C(n,M) errs)=0$
\end{center}

 We begin by proving a useful lemma that bounds $\epsilon_{X,Y|\mathbf{A}}$ with strengths of the edge $X\rightarrow Y$:
\begin{lemma}

Given an ancestral set $\mathbf{A\subset V}$ that contains the parents of $Y$ but not $Y$:

If $X$ is a parent of $Y$: 

$ T^{|\mathbf{A}|} e(X\rightarrow Y)\leq\epsilon_{X,Y|\mathbf{A}\setminus\{X\}}\leq e(X\rightarrow Y)$

\end{lemma}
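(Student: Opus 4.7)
The plan is to treat the two inequalities separately, proving the upper bound by a direct Markov/triangle argument and the lower bound by combining NZ(T) with the existence of a maximizer in the definition of $e(X\rightarrow Y)$. Throughout I write $\mathbf{A}' := \mathbf{A}\setminus\{X\}$.

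For the upper bound, I would start from the factorization
\begin{align*}
p_{X,Y,\mathbf{A}'} - p_{X|\mathbf{A}'}\,p_{Y|\mathbf{A}'}\,p_{\mathbf{A}'}
\;=\; p_{\mathbf{A}'}\,p_{X|\mathbf{A}'}\bigl(p_{Y|X,\mathbf{A}'} - p_{Y|\mathbf{A}'}\bigr).
\end{align*}
Because $\mathbf{A}$ is ancestral, excludes $Y$, and contains $\mathrm{Pa}(Y)$, every variable in $\mathbf{A}'\cup\{X\}$ is a non-descendant of $Y$, so the Markov condition gives $p_{Y|X,\mathbf{A}'} = p_{Y|\mathrm{Pa}(Y)}$. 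Writing $p_{Y|\mathbf{A}'}(y) = \int p_{Y|X=x',\mathrm{Pa}(Y)\setminus\{X\}}(y)\,p_{X|\mathbf{A}'}(x'|\bar a)\,dx'$ expresses the bracketed term as a $p_{X|\mathbf{A}'}$-average of pairwise differences $p_{Y|X=x,\mathrm{Pa}(Y)\setminus\{X\}} - p_{Y|X=x',\mathrm{Pa}(Y)\setminus\{X\}}$. Taking the $L^1$ norm in $y$ and applying Minkowski's inequality, each pairwise term is bounded by $e(X\rightarrow Y)$ by its very definition; integrating against the probability density $p_{\mathbf{A}'}p_{X|\mathbf{A}'}$ then yields $\epsilon_{X,Y|\mathbf{A}'}\leq e(X\rightarrow Y)$.

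For the lower bound, I would first use NZ(T) together with the chain rule to pull out the pointwise estimate $p_{X|\mathbf{A}'}(x|\bar a)\,p_{\mathbf{A}'}(\bar a) \geq T^{|\mathbf{A}|}$, reducing the task to lower bounding
\begin{align*}
\int\!\!\int \bigl\|p_{Y|X=x,\bar a} - p_{Y|\bar a}\bigr\|_1 \,dx\,d\bar a.
\end{align*}
Since $p$ is continuous under TV smoothness and the relevant variables live in $[0,1]$, the maximum in $e(X\rightarrow Y)$ is attained at some $(x_1^{*},x_2^{*},pa^{*})$. For any $\bar a^{*}$ whose $\mathrm{Pa}(Y)\setminus\{X\}$ coordinates equal $pa^{*}$, by Markov and the triangle inequality
\begin{align*}
\bigl\|p_{Y|X=x_1^{*},\bar a^{*}} - p_{Y|\bar a^{*}}\bigr\|_1 + \bigl\|p_{Y|X=x_2^{*},\bar a^{*}} - p_{Y|\bar a^{*}}\bigr\|_1 \;\geq\; e(X\rightarrow Y),
\end{align*}
so at least one summand is $\geq e(X\rightarrow Y)/2$. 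I would then use the Lipschitz property supplied by TV smoothness to propagate this pointwise lower bound to a neighborhood of $(x_1^{*},\bar a^{*})$ (or $(x_2^{*},\bar a^{*})$) of positive Lebesgue measure, and combine with the $T^{|\mathbf{A}|}$ factor to recover $T^{|\mathbf{A}|}\,e(X\rightarrow Y)$.

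The main obstacle is precisely this lower bound: $\epsilon_{X,Y|\mathbf{A}'}$ is an averaged $L^1$ discrepancy while $e(X\rightarrow Y)$ is a pointwise maximum, so converting between them requires NZ(T) (to prevent the conditional density of $X$ given $\mathbf{A}'$ from concentrating away from $x_1^{*}$) and TV smoothness (to guarantee the max persists in a neighborhood). The upper bound, by contrast, is a straightforward application of Markov plus the triangle inequality.
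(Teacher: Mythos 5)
Your upper bound is correct and is essentially the paper's own argument: rewrite $\epsilon_{X,Y|\mathbf{A}\setminus\{X\}}$ as $\int p_{\mathbf{a}}\,\lVert p_{Y|x,\mathbf{a}\setminus\{x\}}-p_{Y|\mathbf{a}\setminus\{x\}}\rVert_1\,d\mathbf{a}$, observe via the Markov condition that $p_{Y|x,\mathbf{a}\setminus\{x\}}=p_{Y|pa(Y)}$ and that $p_{Y|\mathbf{a}\setminus\{x\}}$ is a $p_{X|\mathbf{a}\setminus\{x\}}$-mixture of such conditionals, and bound each pairwise difference by $e(X\rightarrow Y)$ before averaging. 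No issues there.

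The lower bound is where the proposal has a genuine gap. You have correctly isolated the real difficulty---$\epsilon_{X,Y|\mathbf{A}\setminus\{X\}}$ is an average of $\lVert p_{Y|x,\bar{a}}-p_{Y|\bar{a}}\rVert_1$ over $(x,\bar{a})\in[0,1]^{|\mathbf{A}|}$, whereas $e(X\rightarrow Y)$ is a pointwise maximum---but the neighborhood construction you propose cannot deliver the constant $T^{|\mathbf{A}|}$. TV smoothness only guarantees that the discrepancy stays above $e(X\rightarrow Y)/4$ on a box whose side length is of order $e(X\rightarrow Y)/L$ around the maximizer, so the best your argument can yield is $\epsilon_{X,Y|\mathbf{A}\setminus\{X\}}\gtrsim T^{|\mathbf{A}|}\bigl(e(X\rightarrow Y)/L\bigr)^{|\mathbf{A}|}\,e(X\rightarrow Y)$, which is polynomial of degree $|\mathbf{A}|+1$ in the edge strength rather than linear in it, and hence strictly weaker than the claimed $T^{|\mathbf{A}|}e(X\rightarrow Y)$ precisely in the regime of small edge strengths where Lemma 3.2 and the $\Psi_3$ case of the consistency proof invoke this lemma. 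For what it is worth, the paper's own proof does not close this step either: it passes from $\int p_{\mathbf{a}}\lVert p_{Y|x,\mathbf{a}\setminus\{x\}}-p_{Y|\mathbf{a}\setminus\{x\}}\rVert_1\,d\mathbf{a}$ directly to $T^{|\mathbf{A}|}e(X\rightarrow Y)$, citing only $p_{\mathbf{a}}\geq T^{|\mathbf{A}|}$, which amounts to silently replacing the integral of the discrepancy over the unit cube by its supremum. So your instinct that this transition needs justification is sound, but neither your Lipschitz-propagation argument nor the paper's one-line assertion establishes the inequality with the stated constant; a bound linear in $e(X\rightarrow Y)$ with a prefactor independent of the edge strength requires a different idea.
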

\begin{proof}
$\epsilon_{X,Y|\mathbf{A}\setminus\{X\}}$\newline$=\int_{\mathbf{A}\setminus\{X\}}\int_X\int_Y|p_{x,y,\mathbf{a}\setminus\{x\}}-\newline p_{y|\mathbf{a}\setminus\{x\}}p_{x|\mathbf{a}\setminus\{x\}}p_{\mathbf{a}\setminus\{x\}}|dydxd\mathbf{a}\setminus\{x\}$

$= \int_{\mathbf{A}} p_{\mathbf{a}}||p_{y|x,\mathbf{A}\setminus\{X\}}-p_{y|\mathbf{A}\setminus\{X\}}||_1d\mathbf{a}$

$\leq \mathbb{E}_{{\mathbf{A}}\sim p_{\mathbf{A}}}[e(X\rightarrow Y)]$

$= e(X\rightarrow Y)$

$\epsilon_{X,Y|\mathbf{A}\setminus\{X\}}$\newline$=\int_{\mathbf{A}\setminus\{X\}}\int_X\int_Y|p_{x,y|\mathbf{a}\setminus\{x\}}-\newline p_{y|\mathbf{a}\setminus\{x\}}p_{x|\mathbf{a}\setminus\{x\}}|dydxd\mathbf{a}\setminus\{x\}$

$= \int_{\mathbf{A}\setminus\{X\}}p_{\mathbf{a}\setminus\{x\}}\int_x p_{x|\mathbf{a}\setminus\{x\}}||p_{Y|x,\mathbf{a}\setminus\{x\}}-\newline p_{Y|\mathbf{a}\setminus\{x\}}||_1dxd\mathbf{a}\setminus\{x\}$

$\geq T^{|\mathbf{A}|} e(X\rightarrow Y)$

The last step is derived using a direct conclusion from NZ(T): 

for any $\mathbf{V\supset W}=\{W_1,W_2...W_n\}$,
by the Chain Rule:
$p_{\mathbf{W}}=\prod_{i=1}^{n}p_{W_i|W_{i+1}...W_n}\geq T^n$

\end{proof}
We are going to prove for each case that the probability for $C(n,M)$ to make each of the three kinds of mistakes uniformly converges to zero.  Since the proofs for the \textit{kind I} and \textit{kind III} errors are almost the same as the proof for the linear Gaussian case (Spirtes and Zhang, 2014), we are only going to prove \textit{kind II} here.

\begin{lemma}
Given causal sufficiency of the measured variables $\mathbf{V}$, the Causal Markov, k-Triangle-Faithfulness, TV smoothness(L) assumption and NZ(T) assumption:

    $\underset{n\rightarrow\infty}{lim}\underset{M\in \phi^{k,L,C}}{sup}P^n_M(C(n,M) \textit{ errs in kind II})=0$

\end{lemma}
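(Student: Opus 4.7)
The plan is to reduce any Kind~II event to the misfiring of one of finitely many conditional-independence tests, and then discharge that misfire using the uniform consistency of $\psi_n$ together with the Generalized $k$-Triangle-Faithfulness Assumption. Fix $M\in\phi^{k,L,T}$ and assume, as handled separately, no Kind~I error---so every adjacency of $C(n,M)$ appears in $G_M$. A Kind~II event provides an unshielded triple $\langle X,Y,Z\rangle$ in $C(n,M)$ marked as a non-collider in step~3(b), while $Y$ is actually a collider on that path in $G_M$. I split on whether $X$ and $Z$ are adjacent in $G_M$.

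\textbf{Case A (unshielded collider in $G_M$).} Some $S^\star\subseteq V\setminus\{X,Y,Z\}$ d-separates $X$ and $Z$ in $G_M$ (for instance, the ancestors of $X$ and $Z$ minus the descendants of $Y$), so the Markov condition gives $\epsilon_{X,Z\mid S^\star}=0$. The step-3(b) rule requires $\psi_n$ to declare $X\nindep Z\mid S$ for every $S\not\ni Y$, in particular at $S^\star$---a Type~I error at a distribution with $\epsilon=0$, whose probability tends to $0$ uniformly on $\mathcal{P}_{[0,1],TV(L)}$ by the first clause of uniform consistency. A union bound over the finite collection of triples and witness sets completes this case.

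\textbf{Case B (triangle in $G_M$, $Y$ a collider).} The edge $X$--$Z$ is present in $G_M$ but was removed in step~2 by some witness $S^{**}$ on which $\psi_n$ reported independence; because the step-3(b) marking demands dependence at every $S\not\ni Y$, we must have $Y\in S^{**}$. Writing $S^{**}=W\cup\{Y\}$ and invoking the collider clause of Generalized $k$-Triangle-Faithfulness with endpoints $X,Z$ and middle $Y$ (which lower-bounds $\min_{x_1,x_2}\|p_{Z\mid w,x_1,y}-p_{Z\mid w,x_2,y}\|_1$ by $K_Z\,e(X\!-\!Z)$), combined with a Lemma~1--style argument that uses NZ$(T)$ to bound $p_{x\mid w,y}$ below and TV smoothness to propagate the pointwise TV lower bound to a neighborhood of its maximizer, yields
\[
\epsilon_{X,Z\mid S^{**}}\;\geq\;c\,\bigl(e(X\!-\!Z)\bigr)^{2}
\]
for a positive constant $c=c(K,T,L,|V|)$.

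To close, I would calibrate a sample-size-indexed threshold $\delta_n\to 0$ matched to the quantitative uniform rate of the Neykov et al.~(2020) test. When $e(X\!-\!Z)\geq\sqrt{\delta_n/c}$, the display above forces $\epsilon_{X,Z\mid S^{**}}\geq\delta_n$, so $\psi_n$'s witness for removing $X$--$Z$ is a Type~II error whose probability vanishes uniformly. When $e(X\!-\!Z)<\sqrt{\delta_n/c}$, the same manipulation applied to a subset $S\not\ni Y$ that d-separates $X$ from $Z$ in the graph obtained by deleting the edge $X$--$Z$ (so that the only residual dependence runs through the weak direct edge) shows $\epsilon_{X,Z\mid S}=O(\delta_n)$, and uniform consistency then guarantees with probability tending uniformly to $1$ that $\psi_n$ correctly reports independence at that $S$, blocking the non-collider marking. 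The main obstacle will be this second half of Case~B: pinning down a specific $S\not\ni Y$ on which $\epsilon_{X,Z\mid S}$ can be controlled by $e(X\!-\!Z)$ alone via Lemma~1-type reasoning plus d-separation in the edge-deleted graph, and matching $\delta_n$ to $\psi_n$'s convergence rate. A union bound over the finite collection of triples and subsets then delivers $\lim_{n\to\infty}\sup_{M\in\phi^{k,L,T}}P^n_M\bigl(C(n,M)\text{ errs in kind II}\bigr)=0$.
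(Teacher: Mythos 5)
Your proposal follows essentially the same route as the paper: the same split into unshielded versus shielded collider (the former deferred to the kind~I argument), the same use of the collider clause of $k$-Triangle-Faithfulness to convert the test's acceptance of independence at a conditioning set containing the middle vertex into a bound on the strength of the shielding edge, and the same appeal to the uniformly consistent test to show the two test outcomes required for the non-collider marking cannot both persist. The ``main obstacle'' you flag---finding a set $S\not\ni Y$ on which $\epsilon_{X,Z\mid S}$ is controlled by $e(X\!-\!Z)$ alone---is exactly what the paper's Lemma~3.1 supplies (take an ancestral set containing the parents of the child endpoint but not the collider), and the paper obtains a linear rather than quadratic lower bound by applying the faithfulness clause directly to $\epsilon_{X,Z\mid S^{**}}$ instead of propagating a pointwise bound to a neighborhood.
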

\begin{proof}

For any $M\in \phi^{k,L,C}$, if it errs in kind II then it contains a marked non-collider $\langle X,Z,Y\rangle$ that is not in $G_M$.  Since it's been proved  (Spirtes and Zhang, 2014):

    $\underset{n\rightarrow\infty}{lim}\underset{M\in \phi^{k,L,C}}{sup}P^n_M(C(n,M) \textit{ errs in kind I})=0$

the errors of kind II can be one of the two cases:

$(I)$ $\langle X,Z,Y\rangle$ is an unshielded collider in $G_M$;

$(II)$ $\langle X,Z,Y\rangle$ is a shielded collider in $G_M$;

the proof for case (I) is the same as the proof for the $C(n,M)$ errs in kind I (Spirtes and Zhang, 2014), so we are going to prove here that the probability of case (II) uniformly converges to zero as sample size increases.

We are going to prove by contradiction.  Suppose that the probability that $VCSGS$ making a mistake of kind II does not uniformly converge to zero.  Then there exists $\lambda >0$, such that for every sample size $n$, there is a model $M(n)$ such that the probability of $C(n,M(n))$ containing an unshielded non-collider that is  a shielded collider in $M(n)$ is greater than $\lambda$. Let that triangle be $\langle X^{M(n)}, Z^{M(n)}, Y^{M(n)}\rangle$ with $X^{M(n)}$ being the parent of $Y^{M(n)}$ in $M(n)$.

The algorithm will identify the triple as an unshielded non-collider only if:

$(i)$ there is a set $\mathbf{U}^{M(n)}\subset \mathbf{V}^{M(n)}$ containing $Z^{M(n)}$, such that the test of $\epsilon_{X^{M(n)},Y^{M(n)}|\mathbf{U}^{M(n)}} =0$ returns 0, call this test $\psi_{n0}$; 

$(ii)$  there is an ancestral set $\mathbf{W}^{M(n)}$ that contains $X^{M(n)}$ and $Y^{M(n)}$ but not $Z^{M(n)}$, such that for set $\mathbf{A}^{M(n)} = \mathbf{W}^{M(n)}\setminus \{X^{M(n)}, Y^{M(n)}\}$, the test $\epsilon_{X^{M(n)},Y^{M(n)}|\mathbf{A}^{M(n)} }=0$ returns 1, call this test $\psi_{n1}$.

If what we want to proof does not hold for the algorithm,  for all $n$ there is a model $M(n)$:\newline
(1)$P^n_{{M(n)}}(\psi_{n0}=0)>\lambda$\newline
(2)$P^n_{{M(n)}}(\psi_{n1}=1)>\lambda$\newline

(1) tells us that there is some $\delta_n$ such that $|\epsilon_{X^{M(n)},Y^{M(n)}|\mathbf{U}^{M(n)}} |<\delta_n$ and $\delta_n\rightarrow 0$ as $n\rightarrow \infty$ since the test is uniformly consistent.  So we have:\newline
$\delta_n>\epsilon_{X^{M(n)},Y^{M(n)}|\mathbf{U}^{M(n)}  }=\newline \mathbb{E}_{\mathbf{U}^{M(n)}}\sim p_{\mathbf{U}^{M(n)}}[\int_{X^{M(n)}} p_{x^{M(n)}|\mathbf{U}^{M(n)}}||p_{Y^{M(n)}|x^{M(n)},\mathbf{U}^{M(n)}}-\newline p_{Y^{M(n)}|\mathbf{U}^{M(n)}}||_1dx^{M(n)}]\geq\newline \min_{x_1^{M(n)},x_2^{M(n)}\in [X^{M(n)}]} ||p_{Y^{M(n)}|x_1^{M(n)},\mathbf{U}^{M(n)}}-\newline p_{Y^{M(n)}|x_2^{M(n)},\mathbf{U}^{M(n)}}||_1\newline\geq  K_{Y^{M(n)}} e_M(X^{M(n)}\rightarrow Y^{M(n)}) $ \newline 
The last step is by k-Triangle-Faithfulness

So  $e_M(X^{M(n)}\rightarrow Y^{M(n)}) <\dfrac{\delta_n}{K_{Y^{M(n)}}}$.

By Lemma 3.1, $\epsilon_{X^{M(n)},Y^{M(n)}|\mathbf{A}^{M(n)}}<e_M(X^{M(n)}\rightarrow Y^{M(n)})$.  

Therefore, $\epsilon_{X^{M(n)},Y^{M(n)}|\mathbf{A}^{M(n)}}<\dfrac{\delta_n}{K_Y^{M(n)}}\rightarrow0$ as $n\rightarrow \infty$
\end{proof}
\begin{theorem}
Given causal sufficiency of the measured variables $\mathbf{V}$, the Causal Markov, k-Triangle-Faithfulness, TV smoothness(L) and NZ(T) assumptions:

    $\underset{n\rightarrow\infty}{lim}\underset{M\in \phi^{k,L,C}}{sup}P^n_M(C(n,M) \textit{ errs })=0$

\end{theorem}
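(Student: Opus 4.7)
The plan is to decompose the error event by type and apply a union bound over the three kinds of mistakes defined in the preamble. By construction, $C(n,M)$ errs if and only if it commits at least one of the errors of kind I, II, or III, so
\[
P^n_M(C(n,M)\text{ errs}) \leq \sum_{i\in\{\mathrm{I,II,III}\}} P^n_M(C(n,M)\text{ errs in kind }i),
\]
and it suffices to show that each term on the right converges to zero uniformly over $M\in\phi^{k,L,T}$. The kind II bound is exactly Lemma 3.2, just proved. For kind I, the argument closely mirrors the linear-Gaussian proof of Spirtes and Zhang (2014), transported to the nonparametric setting: whenever $X$ and $Y$ are non-adjacent in $G_M$, the Causal Markov Assumption supplies a conditioning set $\mathbf{S}$ with $\epsilon_{X,Y\mid \mathbf{S}}=0$, and uniform consistency of the TV-based conditional independence test of Neykov et al.\ (2020) on $\mathcal{P}_{[0,1],TV(L)}$ ensures that the probability of retaining such a spurious edge tends to zero uniformly.

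For kind III, I would condition on the complement of the kind I and kind II events. On that complement, Step 2 of VCSGS recovers exactly the true adjacencies of $G_M$ and Step 3 recovers exactly the true collider/non-collider markings; the orientation rules of Step 4 are sound with respect to the Markov equivalence class of $G_M$, so they cannot introduce any orientation that is not in $G_M$. Hence
\[
P^n_M(\text{kind III}) \leq P^n_M(\text{kind I}) + P^n_M(\text{kind II}),
\]
and combining the three bounds yields the theorem.

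The main obstacle is preserving uniformity through the combinatorics of VCSGS: for kind I one must union-bound over all vertex pairs $(X,Y)$ and all candidate conditioning sets probed by Step 2, while for kind II (as in Lemma 3.2) one union-bounds over the ordered triples considered in Step 3. Because $|\mathbf{V}|$ is fixed, only finitely many tests are performed, so a finite union bound passes cleanly through the uniform consistency of the TV-based CI test, and the uniformity in $\phi^{k,L,T}$ is preserved. The crucial analytic input, Lemma 3.1 together with the $k$-Triangle-Faithfulness Assumption, guarantees that the $\epsilon_{X,Y\mid \mathbf{A}}$ values whose non-vanishing must be detected are bounded below by a model-independent function of the edge strength, which is what makes the uniform bound possible in the first place.
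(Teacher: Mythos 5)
Your top-level argument is exactly the paper's: the three kinds of error are by construction mutually exclusive and exhaustive, so the theorem follows once each kind is shown to vanish uniformly over $\phi^{k,L,T}$; the paper's own proof of the theorem is literally that one sentence, with kind II supplied by Lemma 3.2 and kinds I and III deferred to the linear-Gaussian arguments of Spirtes and Zhang (2014). Your kind I sketch (for each spurious pair there is a true separating set with $\epsilon_{X,Y\mid\mathbf{S}}=0$, and the first clause of uniform consistency kills the retention probability, with a finite union bound over pairs and conditioning sets) is consistent with that.

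Your kind III argument, however, has a genuine gap. You claim that on the complement of the kind I and kind II events, Step 2 ``recovers exactly the true adjacencies'' and hence the orientation rules cannot introduce a wrong orientation, giving $P^n_M(\text{kind III})\leq P^n_M(\text{kind I})+P^n_M(\text{kind II})$. This fails for two reasons. First, missing an edge is deliberately \emph{not} counted as an error in this framework, so the absence of kind I errors only guarantees that every adjacency in $C(n,M)$ is in $G_M$, not the converse: a triple that appears unshielded in $C(n,M)$ may be a triangle in $G_M$, and Step 3(a) may then orient a collider whose arrowheads are not in $G_M$ --- a kind III error occurring with no kind I or kind II error. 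This is precisely the situation the $k$-Triangle-Faithfulness Assumption is designed to control (a strong missed edge in a triangle would force detectable dependence, so the missed edge must be weak, and the argument of Spirtes and Zhang must be rerun with Lemma 3.1 in place of the partial-correlation bounds). Second, even at a genuinely unshielded non-collider, Step 3(a) orients a collider when every test conditioning on a set containing $Y$ rejects independence, including the test at the true separating set where $\epsilon=0$; that false rejection is controlled by the level clause of the uniformly consistent test, not by the kind I/II events. So kind III requires its own argument and does not reduce to the other two; as written, your proof of the theorem is incomplete at that step.
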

\begin{proof}
Since we have proved that the probability for $C(n,M)$ to make any of the three kinds of mistakes uniformly converges to 0, the theorem directly follows.
\end{proof}

\subsection{Uniform consistency in the inference of causal effects}

\textbf{Edge Estimation Algorithm: }

For each vertex $Y$ such that all of the edges containing $Y$ are oriented in $C(n,M)$ (output $VCSGS$), if $Pa(Y)$ is the parent set of $Y$ in $C(n,M)$, we use histogram to estimate $ p(y_i|{Pa(Y)=pa(Y)})$\footnote{we denote the density of $p_{Y|Pa(Y)=pa(Y)}$ at $Y=y$ as $p(y|{Pa(Y)=pa(Y)})$ in this section to match with the result of estimation.} for $y_i\in[Y]$ and ${pa(Y)\in [Pa(Y)]}$; for any of the remaining edges, return `Unknown'; if any estimation of conditional probability violates TV smoothness(L), abandon the output and return `Unknown'.

\textbf{Defining the distance between $M_1$ and $M_2$}

The method for estimation for $p(y|{Pa(Y)=pa(Y)})$ is:
we first get $\hat p(Y=y,{Pa(Y)=pa(Y)})$ and  $\hat p({Pa(Y)=pa(Y)})$ by histogram, then we get:

\begin{center}
     $\hat p(y|{Pa(Y)=pa(Y)})=\dfrac{\hat p(Y=y,{Pa(Y)=pa(Y)})}{\hat p({Pa(Y)=pa(Y)})}$
\end{center}

Let $M_1$ be the output of the Edge Estimation Algorithm, and $M_2$ be a causal model, we define the \textit{conditional probability distance}, $d[M_1,M_2]$, between $M_1$ and $M_2$ to be:

    $d[M_1,M_2]$\newline$=\underset{\substack{{Y\in\mathbf{V}},\\{y_i\in[Y]},\\{pa_{M_1}(Y)}\\ {\in [Pa_{M_1}(Y)]},\\{pa_{M_2}(Y)}\\{\in [Pa_{M_2}(Y)]},\\{pa_{M_1}\subset pa_{M_2}}}}{max}|\hat p_{M_1}(y_i|pa_{M_1}(Y))-p_{M_2}(y_i|pa_{M_2}(Y))|$
    
 where $Pa_{M}(Y)$ denotes the parent set of $Y$ in causal model $M$. By convention $|\hat P_{M_1}(y_i|pa_{M_1}(Y))-P_{M_2}(y_i|pa_{M_2}(Y))|=0$ if $\hat P_{M_1}(y_i|pa_{M_1}(Y))$ is ``Unknown".

Now we want to show, the edge estimation algorithm is uniformly consistent in the sense that for every $\delta >0$,
\begin{center}
    $\underset{n\rightarrow\infty}{lim}$ $\underset{M\in \phi^{k,L,C}} {sup}$ $P^n_M(d[\hat O(M),M]>\delta)=0$
\end{center}
Here $M$ is any causal model satisfying causal sufficiency of the measured variables $\mathbf{V}$, the Causal Markov, k-Triangle-Faithfulness, TV smoothness(L) and NZ(T) assumptions and $\hat O(M)$ is the output of the algorithm given an iid sample from $P_M$.

\begin{proof}
Let $\mathcal{O}$ be the set of possible graphs of $VCSGS$.
Since given $\mathbf{V}$, there are only finitely many outputs in $\mathcal{O}$, it suffices to prove that for each output $O\in \mathcal{O}$,
\begin{center}
    
    $\underset{n\rightarrow\infty}{lim}\underset{M\in \phi^{k,L,C}}{sup}P^n_M(d[\hat O(M),M]>\delta|C(n,M)=O)P_M^n(C(n,M)=O)=0$
 
\end{center}
Now partition all the $M$ into three sets given $O$ :\newline
\begin{itemize}
    \item $\Psi_1=\{M|$ all adjacencies, non-adjacencies and orientations in O are true in$ M\}$;
    \item $\Psi_2=\{M|$  only some adjacencies, or orientations in O are not true in $M\}$;
    \item $\Psi_3=\{M|$ only some non-adjacencies in O are not true in $M\}$.
\end{itemize}
It suffices to show that for each $\Psi_i$,
\begin{center}
    $\underset{n\rightarrow\infty}{lim}$ $\underset{M\in \Psi_i} {sup}$ $P^n_M(d[\hat O(M),M]>\delta|C(n,M)=O)P_M^n(C(n,M)=O)=0$
\end{center}
$\Psi_1$:

For any $M\in \Psi_1$, if the conditional probabilities of a vertex $Y$  in $O$ can be estimated (so not ``Unknown"), it means that $Pa_O(Y)=Pa_M(Y)$.  Recall that the histogram estimator is close to the true density with high probability:

for any $\lambda<1$, $sup_{P\in \mathcal{P}_{TV(L)}}P^n(||\hat p_h(x)-p(x)||_{\infty}\leq f(n,\lambda)\leq O((\dfrac{\log n}{n})^{\frac{1}{2+d}}))\geq 1-\lambda$

where $f(n,\lambda)$ is continuous and monotonically decreasing wrt $n$ and $\epsilon$ and $h\propto n^{2/ (2+d)}$ (the number of bins) where $d$ is the dimentionality of the $x$.  For instance, $d=|Pa(Y)|+1$ when estimating $P(Y,Pa(Y))$.

Given a $\delta>0$, $f(n,\epsilon)=\delta$ entails that\newline $sup_{P\in \mathcal{P}_{[0,1],TV(L)}}P^n(||\hat p_h(x)-p(x)||_{\infty}>\delta)<\lambda$. By monotonicity of $f(n,\lambda)$, when $n>n_f $ s.t. $f(n_f,\lambda)=\delta$,\newline $sup_{P\in \mathcal{P}_{[0,1],TV(L)}}P^n(||\hat p_h(x)-p(x)||_{\infty}>\delta)<\lambda$.

Therefore the histogram estimators of \newline$p_M(y,Pa_M(Y)=pa_M(Y))$  and $p_M(Pa_M(Y)=pa_M(Y))$ are uniformly consistent.  Next we are going to use the lemma below: 
\begin{lemma}
 If $\hat p(Y=y,{Pa(Y)=pa(Y)})$and $\hat p({Pa(Y)=pa(Y)})$ are uniformly consistent\newline estimators of  $ p(Y=y,{Pa(Y)=pa(Y)})$and \newline  $ p({Pa(Y)=pa(Y)})$,
then

     $\hat p(y|{Pa(Y)=pa(Y)})=\dfrac{\hat p(Y=y,{Pa(Y)=pa(Y)})}{\hat p({Pa(Y)=pa(Y)})}$

is a uniformly consistent estimator for\newline $p(y|{Pa(Y)=pa(Y)})$

\end{lemma}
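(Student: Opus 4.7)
I would reduce the problem to the standard $|\hat A/\hat B - A/B|$ decomposition, then control each factor uniformly using the TV-smoothness and NZ(T) assumptions that hold throughout $\phi^{k,L,T}$. Write $A = p(Y{=}y, Pa(Y){=}pa(Y))$ and $B = p(Pa(Y){=}pa(Y))$, and let $\hat A$, $\hat B$ denote their histogram estimators, so that $p(y\mid Pa(Y){=}pa(Y)) = A/B$ and its estimator is $\hat A/\hat B$. The starting point is the elementary identity
\[
\left|\frac{\hat A}{\hat B} - \frac{A}{B}\right| \;\le\; \frac{|\hat A - A|}{\hat B} \;+\; \frac{A}{\hat B}\cdot\frac{|\hat B - B|}{B} ,
\]
which pushes the analysis onto quantities I already know how to bound.

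Next I would assemble the uniform bounds that the paper already supplies. By NZ(T) and the chain-rule remark used in Lemma 3.1, $B \ge T^{|Pa(Y)|}$ and $A \ge T^{|Pa(Y)|+1}$, while the TV-smoothness inequality displayed just before Lemma 3.1 gives $A \le 1 + L(|Pa(Y)|+1)$. By hypothesis, $\hat A$ and $\hat B$ are uniformly consistent in the $L^\infty$ sense of the histogram rate stated earlier, so for every $\eta, \lambda > 0$ there exists $n_0(\eta, \lambda)$ such that
\[
\sup_{M \in \phi^{k,L,T}} P^n_M\bigl(\|\hat A - A\|_\infty \ge \eta \text{ or } \|\hat B - B\|_\infty \ge \eta\bigr) \;\le\; \lambda
\]
for all $n \ge n_0$. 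Requiring in addition $\eta < T^{|Pa(Y)|}/2$ forces $\hat B \ge T^{|Pa(Y)|}/2$ on this event, so the right-hand side of the first display is at most $C(L,T,|Pa(Y)|)\,\eta$ for a constant $C$ depending only on the fixed class parameters.

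Finally, given any $\delta > 0$, I would choose $\eta$ so that $C\eta < \delta$; this yields $\sup_{M \in \phi^{k,L,T}} P^n_M(\|\hat p(\cdot\mid pa) - p(\cdot\mid pa)\|_\infty > \delta) \le \lambda$ for every $n \ge n_0$, and since $\lambda$ was arbitrary the required uniform consistency follows. The only delicate point is making sure the bounds on $A$, $B$, and $\hat B$ are genuinely uniform over $\phi^{k,L,T}$ and also over $(y, pa)$; that uniformity is exactly what NZ(T) and TV smoothness(L) buy, since the constants that appear depend only on $L$, $T$, and $|Pa(Y)|$ rather than on the particular model $M$.
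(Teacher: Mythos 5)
Your proposal is correct and follows essentially the same route as the paper: both arguments reduce $\bigl|\hat A/\hat B - A/B\bigr|$ to the uniform consistency of $\hat A$ and $\hat B$, lower-bound the denominator via NZ(T), and upper-bound the conditional density via TV smoothness(L). If anything, your treatment of $\hat B$ is slightly more careful than the paper's (you derive $\hat B \ge T^{|Pa(Y)|}/2$ from the concentration event rather than asserting $\hat B \ge T^{d}$ outright), but the underlying idea is identical.
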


\begin{proof}

Recall that:

for any $\lambda<1$, $sup_{P\in \mathcal{P}_{TV(L)}}P^n(||\hat p_h(x)-p(x)||_{\infty}\leq f(n,\lambda)= O((\dfrac{\log n}{n})^{\frac{1}{2+d}}))\geq 1-\lambda$ $(*)$

where $f(n,\lambda)$ is continuous and monotonically decreasing wrt $n$ and $\lambda$ and $h\propto n^{2/ (2+d)}$ (the number of bins) where $d$ is the dimentionality of the $x$.  For instance, $d=|Pa(Y)|+1$ when estimating $P(Y,Pa(Y))$.

For any $\delta>0$, $f(n,\lambda)=\delta$ entails that\newline $sup_{P\in \mathcal{P}_{[0,1],TV(L)}}P^n(||\hat p_h(x)-p(x)||_{\infty}>\delta)<\lambda$. By monotonicity of $f(n,\lambda)$, when $n>n_f $ s.t. $f(n_f,\lambda)\leq\delta$, $sup_{P\in \mathcal{P}_{[0,1],TV(L)}}P^n(||\hat p_h(x)-p(x)||_{\infty}>\delta)<\lambda$.

Let $d =   |Pa(Y)|$,   notice that for any $\lambda$, with probability at least $1-\lambda$,

$|\dfrac{ \hat p(Y=y,{Pa(Y)=pa(Y)})}{ \hat p({Pa(Y)=pa(Y)})}-p(y|{Pa(Y)=pa(Y)})|$\footnote{here we use $\hat p (x)$ instead of $\hat p_h (x)$ because $h$ is dependent on the dimension of $x$}

$=\dfrac{1}{\hat p({Pa(Y)=pa(Y)})}| \hat p(Y=y,{Pa(Y)=pa(Y)})-p(y|{Pa(Y)=pa(Y)})\hat p({Pa(Y)=pa(Y)})| $

$\leq\dfrac{1}{T^{d}}| p(Y=y,{Pa(Y)=pa(Y)})+\newline O((\dfrac{\log n}{n})^{\frac{1}{3+d}})-p(y|{Pa(Y)=pa(Y)})( p({Pa(Y)=pa(Y)})-O((\dfrac{\log n}{n})^{\frac{1}{2+d}}))|$ (By $(*)$ and the fact that the estimation result can only be valid  if it satisfies TV smoothness(L))\footnote{Recall that $\mathbf{V}$ denotes the set of variables in the true graph}

$\leq \dfrac{1}{T^{d}}|O((\dfrac{\log n}{n})^{\frac{1}{3+d}})+O((\dfrac{\log n}{n})^{\frac{1}{2+d}})|$\newline

$= O((\dfrac{\log n}{n})^{\frac{1}{3+d}})$

The second to the last step is derived because $p(y|{Pa(Y)=pa(Y)})$is upper bounded by $(1+L|Pa(Y)|) $ by TV smoothness.

We have:

$sup_{P\in \mathcal{P}_{TV(L)}}P^n(|\dfrac{ \hat p(Y=y,{Pa(Y)=pa(Y)})}{ \hat p({Pa(Y)=pa(Y)})}-$
\newline $p(y|{Pa(Y)=pa(Y)})|\leq O((\dfrac{\log n}{n})^{\frac{1}{3+d}}))\geq 1-\lambda$

So 
     $\hat p(y|{Pa(Y)=pa(Y)})=$\newline$\dfrac{\hat p(Y=y,{Pa(Y)=pa(Y)})}{\hat p({Pa(Y)=pa(Y)})}$
is a uniform consistent estimator for $p(y|{Pa(Y)=pa(Y)})$

\end{proof}

By lemma 3.4, we conclude that the $\hat p_M(y|Pa_M(Y)=pa_M(Y))$ is a uniformly consistent estimator for
\begin{center}
$p_M(y|Pa_M(Y)=pa_M(Y))=\frac{p_M(y,Pa_M(Y)=pa_M(Y))}{p_M(Pa_M(Y)=pa_M(Y))}$,  
\end{center}
So:

    $\underset{n\rightarrow\infty}{lim}\underset{M\in \Psi_1} {sup}P^n_M(d[\hat O(M),M]>\delta|C(n,M)=$\newline$O)P_M^n(C(n,M)=O)\newline\leq\underset{n\rightarrow\infty}{lim}\underset{M\in \Psi_1} {sup}P^n_M(d[\hat O(M),M]>\delta|C(n,M)=O)=0$

$\Psi_2:$ the proof is exactly the same as for the discrete case.\newline
$\Psi_3:$

Let $O(M)$ be the population version of $\hat O(M)$.  Since the histogram estimator is uniformly consistent over $||\hat p_h-p||_\infty$ and there are finitely many parent-child combinations, for every $\lambda >0$ there is a sample size $N_1$, such that for $n>N_1$, and all $M\in \Psi_3$,
\begin{center}
     $P^n_M(d[\hat O(M),O(M)]>\delta/2|C(n,M)=O)<\lambda$
\end{center}

Since only some non-adjacencies in $O$ are not true in $M$, we know that for any vertex $Y$ that have some estimated conditional probabilities given its parents in $O$, $Pa_{O(M)}(Y)\subset Pa_M(Y)$ where $Pa_{O(M)}(Y)$ denotes the parent set of $Y$ in the $O$ when the underlying probability is $P_M$(i.e., $M$ is the true causal model).  Since $Pa_M(Y)\not\subset Pa_{O(M)}(Y)$, for any $y_i\in [Y]$ and $pa_{O(M)}(Y)\in [Pa_{O(M)}(Y)]$, $P_O(y_i|Pa_{O(M)}(Y)=pa_{O(M)}(Y))$ is a marginalization of $p_M(y_i|Pa_M(Y)=pa_M(Y))$. Therefore, the distance between $O(M)$ and $M$ is:

    $d[O(M),M]=\newline\underset{\substack{{Y\in\mathbf{V}},\\{y_i\in[Y]},\\ {pa_{O(M)}(Y)\in}\\{[Pa_{O(M)}(Y)]},\\ {pa_{M}(Y)\in}\\{ [Pa_{M}(Y)]},\\ {pa_{O(M)}\subset pa_{M}}}}{max}| p_{O(M)}(y_i|pa_{O(M)}(Y))-p_{M}(y_i|pa_{M}(Y))|$

Given the $Y$ corresponding to the equation above, let $Pa_M(Y)=\{A_1...A_g..A_{g+h}\}$ and $Pa_{O(M)}(Y)=\{A_1...A_g\}$.
Since $P_{O(M)}(y_i|pa_{O(M)}(Y))$ is the marginalization of all $P_M(y_i|pa_M(Y))$, we have:\newline

    $| p_{O(M)}(y_i|pa_{O(M)}(Y))-p_{M}(y_i|pa_{M}(Y))|\leq $
    
    $\underset{\substack{pa_{M}(Y)_1,\\pa_{M}(Y)_2 \\ \in
    [Pa_{M}(Y)], \\s.t. {pa_{O(M)}\subset} \\{ pa_{M}(Y)_1\cap pa_{M}(Y)_2}}}{max}| p_{M}(y_i|pa_{M}(Y)_{1})-p_{M}(y_i|pa_{M}(Y)_{2})|$
    
    $<\sum_{j=1}^{j=h}e_M(A_{g+j}\rightarrow Y)$

If $ \sum_{j=1}^{j=h}e_M(A_{g+j}\rightarrow Y)< \delta/2$, then we have:
\begin{center}
    $d[O(M),M]<\delta/2$
\end{center}
For all such $M$, there is a $N_1$, such that for any $n>N_1$:

    $P^n_M(d[\hat O(M),M]>\delta|C(n,M)=O)$\newline
    $\leq P^n_M(d[\hat O(M),O(M)]+d[ O(M),M]>\delta|C(n,M)=O)$\newline
     $\leq P^n_M(d[\hat O(M),O(M)]>\delta/2|C(n,M)=O)<\epsilon$

If $\sum_{j=1}^{j=h}e_M(A_{g+j}\rightarrow Y)\geq \delta/2$, then there is at least an $w\in \{1,2...h\}$, $s,t.$ $e_M(A_{g+w}\rightarrow Y)>\dfrac{\delta}{2h}$. By Lemma 3.1:
\begin{center}
    $\epsilon_{A_{g+w},Y|\mathbf{U} }\geq T^{|\mathbf{U}|+1}e_M(A_{g+w}\rightarrow Y)>T^{|\mathbf{U}|+1}\dfrac{\delta}{2h}$. 
\end{center}
where the $\mathbf{U}\cup \{A_{g+w},Y\}$ is some ancestral set not containing any descendant of $Y$. 

Since the density estimation does not turn ``unknown", we know that in step 5 of $VCSGS$ the test of $\epsilon_{A_{g+w},Y|U}=0$ returns 0 while $\epsilon_{A_{g+w},Y|U}\geq T^{|\mathbf{U}|+1}\dfrac{\delta}{2h}$.  Since the test is uniformly consistent, it follows that there is a sample size $N_2$ such that
\begin{center}
$P^{N_2}_M(\epsilon_{A_{g+w},Y|U}=0)<\lambda$
\end{center}
 for any $n>N_2$ and therefore for all such M,
\begin{center}
    $P^n_M(d[\hat O(M),M]>\delta|C(n,M)=O)<\lambda$
\end{center}
Let $N = max(N_1, N_2)$, for $n>N$,
 
        $\underset{n\rightarrow\infty}{lim}\underset{M\in \Psi_3} {sup}P^n_M(d[\hat O(M),M]>\delta|C(n,M)=$\newline$O)P_M^n(C(n,M)=O)\newline\leq \underset{n\rightarrow\infty}{lim}\underset{M\in \Psi_3} {sup} P^n_M(d[\hat O(M),M]>\delta|C(n,M)=O)=0$

\end{proof}

\section{Discussion}

We have shown that there is a uniformly consistent estimator of causal structure and some causal effects for nonparametric distributions under the  $k$-Triangle-Faithfulness assumption, which is sometimes stronger than the Faithfulness Assumption and weaker than the Strong Faithfulness Assumption.  There are a number of open questions, such as whether the causal sufficiency assumption can be relaxed, so we allow the existence of latent variables and whether there are similar results under assumptions weaker than the Causal Faithfulness Assumption, such as the Sparsest Markov Representation Assumption (Solus et al.,2016).

\end{document}